\documentclass{article}

\usepackage{amsmath}
\usepackage{amsfonts}
\usepackage{amsthm}
\usepackage{amssymb}
\usepackage{booktabs}
\usepackage{multirow}
\usepackage{hyperref}
\usepackage{bm}
\usepackage{graphicx}
\usepackage[justification=centering]{caption}
\usepackage{biblatex}
\addbibresource{robllc.bib}

\newcommand{\ind}[3]{({#1}\!\perp\kern-6pt\perp\!{#2}|{#3})}

\newcommand{\R}{\mathbb{R}}

\newcommand{\BB}{\mathbf{B}}
\newcommand{\CC}{\mathbf{C}}

\newcommand{\II}{\mathbf{I}}
\newcommand{\JJ}{\mathbf{J}}

\newcommand{\TT}{\mathbf{T}}
\newcommand{\UU}{\mathbf{U}}

\newcommand{\bb}{\mathbf{b}}
\newcommand{\cc}{\mathbf{c}}
\newcommand{\ee}{\mathbf{e}}
\newcommand{\bt}{\mathbf{t}}
\newcommand{\xx}{\mathbf{x}}

\newcommand{\zero}{\mathbf{0}}

\newcommand{\SSigma}{\bm{\Sigma}}

\newcommand{\cD}{\mathcal{D}}
\newcommand{\cE}{\mathcal{E}}
\newcommand{\cI}{\mathcal{I}}
\newcommand{\cJ}{\mathcal{J}}
\newcommand{\cM}{\mathcal{M}}
\newcommand{\cN}{\mathcal{N}}
\newcommand{\cP}{\mathcal{P}}
\newcommand{\cU}{\mathcal{U}}

\newcommand{\bp}{\varepsilon^\ast}
\newcommand{\ve}{\varepsilon}
\newcommand{\emt}{(\cJ_k, \cU_k)}
\newcommand{\Cxk}{\CC_\xx^k}
\newcommand{\Cxkh}{\hat\CC_\xx^k}
\newcommand{\See}{\SSigma_\ee}
\newcommand{\Seeh}{\hat\SSigma_\ee}
\newcommand{\BBh}{\hat\BB}
\newcommand{\llce}{(\hat\BB, \hat\SSigma_\ee)}

\DeclareMathOperator{\cov}{cov}

\DeclareMathOperator{\range}{range}

\DeclareMathOperator*{\argmin}{arg\,min}

\newtheorem{lemma}{Lemma}
\newtheorem{proposition}{Proposition}

\title{Robust Causal Analysis of Linear Cyclic Systems With Hidden Confounders}
\author{Boris Lorbeer\\Technical University Berlin\\10587 Berlin\\Germany \and Axel K\"upper\\Technical University Berlin\\10587 Berlin\\Germany}
\date{}

\begin{document}

\maketitle

\begin{abstract}
    We live in a world full of complex systems which we need to improve our
    understanding of. To accomplish this, purely probabilistic investigations
    are often not enough. They are only the first step and must be followed by
    learning the system's underlying mechanisms. This is what the discipline of
    causality is concerned with. Many of those complex systems contain feedback
    loops which means that our methods have to allow for cyclic causal
    relations. Furthermore, systems are rarely sufficiently isolated, which
    means that there are usually hidden confounders, i.e., unmeasured variables
    that each causally affects more than one measured variable.  Finally, data
    is often distorted by contaminating processes, and we need to apply methods
    that are robust against such distortions. That's why we consider the
    robustness of LLC, see \cite{llc}, one of the few causal analysis methods
    that can deal with cyclic models with hidden confounders.  Following a
    theoretical analysis of LLC's robustness properties, we also provide robust
    extensions of LLC. To facilitate reproducibility and further research in
    this field, we make the source code publicly available.

\end{abstract}

\section{Introduction}
We are surrounded by systems, frameworks of smaller parts, that are somehow
connected to produce a larger entity. We encounter those systems in all
natural sciences, physics, biology, and chemistry, but also in other
fields like economy, sociology, medical science, and epidemiology. Further examples are
human-built structures like complex machines, mobile networks, business
processes, or highly advanced simulators. In all those fields, systems often
become so exceedingly complex that it is virtually impossible to understand all
the underlying causal relations between the different constituent parts. An
understanding is, however, of vital importance for the control and maintenance of
those entities. In the field of predictive maintenance, one tries to use anomaly
detection and root cause analysis to automate those tasks, but for this to
work, in particular for the root cause analysis part, it is essential to
understand the underlying mechanisms connecting the different pieces of the
whole structure.

To get some insight, people often consider the most important variables
describing the system as {\em random variables} and investigate their
stochastic relations. This then
leads to networks of random variables, which can be described by various kinds
of graphs like Markov random fields or Bayesian belief networks. However, the
edges in those graphs cannot be interpreted as causal relations. To obtain
information about those causal relations from measurement data, one
needs to apply methods from the field of {\em causality}. In the formulation by Pearl, see
for instance \cite{pearl}, the causality of a system is provided, to a large part, by
its {\em causal graph}. The idea of this causal graph is that here, the
directed edges do indeed describe the causal links between the contained random
variables as well as their causal direction.

One of the goals of causality is thus to obtain the functions describing the
causal relationships between nodes and their parents in causal graphs. Most
techniques presume that all the {\em relevant} random variables are
measured.
Unfortunately, in real life, this is rarely the case. Often, there are random variables that are not
measured but still influence multiple observed variables simultaneously. Those are called {\em hidden
confounders}; see, for instance, \cite{pearl} or \cite{pjs}. To obtain more realistic
models, one thus has to use techniques that still work in the presence of such
hidden confounders.

Another property that is usually presumed for causal models is that they are
{\em acyclic}, meaning that there are no cycles in the causal graph. This is
often justified by the argument that the events of cause and effect are separated
in time, i.e., the cause is happening {\em before} the effect, and that it is thus
physically impossible for the effect to also have some causal influence on the
original cause. However, any system with feedback loops is evidence for the
necessity of cycles in causality. Sometimes, the problem of feedback loops
is handled by considering causal time series graphs, where edges only point
into the future. But those time series discretize time, and it is then
presumed that the measurements have such a high time resolution that we can exclude
interaction between variables at the same discrete time events, i.e. we exclude
instantaneous effects. This approach has two drawbacks. First, it is often
unrealistic to presume the necessary high resolution of measurements, and
second, it leads to a large increase in the number of variables as well as
autocorrelation effects which turns out to be detrimental to the power of the
prediction algorithms. Thus, it is often inevitable to restrict oneself to
methods that can handle cyclic causal relations.

If we require causal methods to be able to handle both hidden confounders and
cycles, only very few remain. One of them is {\em LLC}
(Linear system with Latent confounders and Cycles), see \cite{llc},
which presumes the model to be linear and without self-cycles, takes as input
data the measurements from interventions, and
returns the complete linear model. This is the technique we will investigate
in this paper.

Since we use measurement data to learn the structure of the true underlying
causal model of a system, we have to be prepared for the possibility of low
data quality, i.e., the data could be contaminated with outliers. The
discipline in statistics that addresses the handling of outliers is called {\em
robust statistics}. The focus of this paper is the robustness of the
algorithm LLC.

In more detail, our contributions are as follows: We will first provide an
analysis of the robustness of the original LLC algorithm using
metrics of robustness well established in the robustness literature. Second,
we will extend LLC to become more robust and evaluate those extensions.

\section{Related Work}
Causality is a relatively young branch of statistics. For a full account of its
foundations, see, e.g., \cite{pearl}, \cite{sgs}, and \cite{pjs}. Those treatises also
cover hidden confounders. For a discussion of cyclic systems, however, one has to
look elsewhere. Early accounts can be found in \cite{spirtes1995cyclic},
\cite{pearl1996cyclic}, and \cite{neal2000cyclic}. However, a comprehensive
fundamental treatment of cyclic models was provided only recently; see
\cite{forre2017markov} and \cite{bongers}.

Comprehensive discussions of the foundations of robust statistics can be found,
e.g., in \cite{huber2004robust}, \cite{hampel2011robust}, and
\cite{maronna2019robust}. While Huber's book is the
oldest and most general one, the treatise by Hampel et al.
concentrates on influence functions. What makes the book
by Maronna et al. particularly useful is its emphasis on
freely available implementations in the programming language R, see \cite{r2013r}.

As mentioned above, the focus of this paper is the algorithm LLC. It first
appeared in \cite{eberhardt2010combining} and was further developed in
\cite{llc-f} and \cite{llc}.

\section{Prerequisites}

\subsection{Short Summary of the LLC Algorithm}\label{llcsum}
A very detailed and accessible description of LLC is given in \cite{llc}. Here,
we will provide only an overview of the algorithm, establishing the key notions
and notations, closely following \cite{llc}.

Let's presume that we have a causal system consisting of $d$ observed random variables
$\xx = (x_1, \ldots, x_d)^T$, as well as an equal
amount of unobserved {\em disturbances} (or {\em errors}), given by random
variables $\ee = (e_1, \ldots, e_d)^T$, which together satisfy the following
linear equation:
\begin{equation}\label{llceq}
    \xx = \BB\xx + \ee,
\end{equation}
where $\BB = (b_{ij})_{i,j=1}^d$ is a $(d\times d)$ matrix containing as coefficients the direct causal effects
$b_{ij}$ of the causations $x_j \to x_i$. While we do allow cycles, i.e., for two different
indices $i\ne j$ we allow $b_{ij}$ and $b_{ji}$ to be nonzero at the same time, we don't allow
self-cycles, i.e., $b_{ii} = 0$ for all $i=1,\ldots,d$.
The disturbances $\ee$
satisfy $mean(\ee) = 0$ and the covariance matrix of $\ee$ is referred to by
$\See$.  It can be shown, see \cite{llc}, that the off-diagonal elements of
$\See$ describe the effects of hidden confounders. Thus, the linear causal
model we are considering is described by the pair $(\BB, \See)$, and the goal
of the LLC algorithm is to learn those two matrices from observed data, i.e.,
realizations of $\xx$.

This task is usually not solvable without data from {\em interventions}. An
intervention is an experiment in which the system (\ref{llceq}) is changed such
that a subset $\xx^\prime$ of the observed variables $\xx$, the {\em intervened
variables}, is forced to certain values independent of the state of the other
variables. That means that any causal mechanisms affecting elements of
$\xx^\prime$ that were present in the non-intervened system are severed in the
intervention. This is actually a special type of intervention, a {\em perfect
intervention}, see, e.g., \cite{pearl}. We refer to the experiment without
intervention as the {\em purely observational} experiment. An intervention is
described by the indices $\cJ = \{j_1, \ldots, j_m\}\subset\{1,2,\ldots,d\}, m<d$,
of the intervened nodes. In fact, following \cite{llc}, our notation for an
experiment is $\cE = (\cJ, \cU)$, where $\cU = \{1, \ldots, d\}\setminus \cJ$.
Each experiment $(\cJ, \cU)$ comes with two belonging matrices $\JJ:=diag(\cJ),
\UU:=diag(\cU)$, where $diag(\cI), \cI \subset\{1,\ldots,d\}$, describes the matrix that is zero everywhere
except for the diagonal elements whose row indices are contained in $\cI$,
and those diagonal elements are set to one. Note, that $\UU + \JJ = \II$, where $\II$ is the $d$-dimensional identity matrix.
The index sets $\cJ$ and $\cU$ will also be used in a convenient notation for
submatrices. Namely, the matrix $M_{\cI_1\cI_2}$ with index sets $\cI_1,
\cI_2\subset\{1,\ldots,d\}$ is defined as the submatrix of $M$ that is the intersection of
all the rows of $M$ with indies in $\cI_1$ and all the columns of $M$ with indices in
$\cI_2$.

We usually consider a set of experiments $\{\cE_k | k=1,\ldots, K\}$ with
pertinent matrices $\{(\JJ_k, \UU_k) | k = 1,\ldots, K\}$.  In an experiment,
the intervened variables are replaced by independent standard normal random
variables $\cc$. Thus, LLC's defining equation (\ref{llceq}) in the presence of
an intervention $(\cJ_k, \cU_k)$, is given by (see \cite{llc}):
\begin{equation}
    \xx = \UU_k\BB\xx + \UU_k\ee + \JJ_k\cc\label{llceqExp}
\end{equation}
where $\cc \sim \cN(\zero, \II)$.
The algorithm LLC requires the system to be {\em weakly stable}, which means
that the matrix $\BB$ has to satisfy that $\II-\UU_k\BB$ is invertible for each
experiment.

Using interventions, we can build an estimator $\hat\BB$ of the matrix $\BB$
from the observations $\xx$ as follows. Let's first consider the covariance
matrix $\Cxk$ of $\xx$ in the experiment $\emt$. The coefficient $c^k_{ui}$ of
$\Cxk$ with $i\in\cJ$ and $u\in\cU$ is called the {\em total causal effect} of
the intervened variable $x_i$ on the non-intervened variable $x_u$ in the
experiment $\emt$, written as
$t(x_i\rightsquigarrow x_u||\cJ_k)$. This name makes sense because if
forced changes of $x_i$ are correlated with observations of $x_u$, then there
must be a total, i.e.\ possibly via multiple causal paths, effect of $x_i$ on $x_u$. This
correlation cannot be due to a causal effect from $x_u$ to $x_i$, or originate from
another variable $x_c$ that causally influences both $x_i$ and $x_u$,
$x_i\leftarrow x_c \to x_u$, because $x_i$ is an intervened variable, i.e.\ all
causal mechanisms into $x_i$ are severed. Using those total effects, we can
establish the following crucial linear constraints for the direct effects
$b_{rs}$ (see \cite{llc} for a proof):
\begin{equation}
    t(x_i\rightsquigarrow x_u||\cJ_k) = b_{ui} +
    \sum_{u^\prime\in\cU_k\setminus\{u\}}
    t(x_i\rightsquigarrow x_{u^\prime}||\cJ_k)b_{uu^\prime}.\label{pathAna}
\end{equation}
In words, under the experiment $\emt$, the total causal effect of an intervened
variable $x_i$ on a non-intervened variable $x_u$ is the direct effect of $x_i$
on $x_u$ plus the sum of the total effects on all the other non-intervened
variables $x_{u^\prime}$ times their direct effects on $x_u$. Note that only some of the coefficients
of $\Cxk$ are total effects. Namely, they are contained in the submatrices $\TT^k$:
\begin{equation}\label{ttk}
    \TT^k := (\Cxk)_{\cU_k\cJ_k} = (\Cxk)_{\cJ_k\cU_k},
\end{equation}
where the equality holds because covariance matrices are symmetric.

The constraints (\ref{pathAna}) are actually all we need to construct the
estimator $\hat\BB$. We collect all those constraints from all experiments
into one large system of linear equations in the unknowns $b_{ij}$:
\begin{equation}
    \bt = \TT\bb, \label{ttb}
\end{equation}
where $\bt$ contains all the total effects from the LHS of (\ref{pathAna}),
$\bb$ is the (row-wise) flattened version of $\BB$, but without the diagonal
elements (they are always zero, see above), and $\TT$ contains in each row the total
effects according to the RHS of (\ref{pathAna}). Note, that since in each of
the constraints (\ref{pathAna}) all the $b_{us}$ on the RHS belong to the
$u$-th row of $\BB$, i.e. all direct effect are those into $x_u$, we can
arrange the rows of $\TT$ such that $\TT$ has a block-diagonal structure (note, that those blocks, as well as $\TT$, are usually not quadratic).

In \cite{llc}, it is shown that if the experiments $\emt$ satisfy the so-called
{\em pair conditions}, then the system (\ref{ttb}) has full column rank and can be
solved using standard methods.

After an estimate $\BBh$ of $\BB$ is available, it can be used to estimate $\See$: From
(\ref{llceq}) we have:
\begin{equation}
    \begin{split}
        \xx &= \BB\xx + \ee\\
        \ee &= (\II - \BB)\xx\\
        \See &= (\II-\BB)\CC_\xx^0(\II-\BB)^T,\label{SeeEq}
    \end{split}
\end{equation}
where $\CC_\xx^0$ is the covariance of $\xx$ for the purely observational
experiment. Whence, having an estimation of $\BB$ and of $\CC_\xx^0$, we can
compute an estimation $\Seeh$ of $\See$. This completes the LLC algorithm.

\subsection{Robustness Analysis}
Robustness analysis is about measuring how strongly estimators are influenced by
outliers, i.e.\ contaminations of the dataset. In this section, we will
establish a minimum of notations for robustness analysis that are
needed for this paper, following the standard literature. For more details, see
\cite{huber2004robust}, \cite{hampel2011robust}, and \cite{maronna2019robust}.

Consider a given statistical model $\cM$ and a sample $P$ of size $n$ from
$\cM$. Next, assume that a fraction of $P$ of size $m, m<n$, is replaced by a
dataset $C$ of size $|C|=m$, thus creating a
new dataset $D, |D| = n$. Then, we call
$C$ a {\em contamination}, or the {\em outliers}, and $D$ a {\em contamination of $P$}.
The ratio $\ve(D):=\frac{m}{n}$ is the {\em contamination
rate} of $D$.

To assess the robustness of estimators against contamination, we need a measure
of robustness. There are various such metrics available in the literature. We
will focus on the {\em breakdown point} ({\em BP}), see, e.g.,
\cite{maronna2019robust}. Adapted to our situation, the BP of the
LLC estimators $\llce$ is the largest number $\bp$ with the property that
for any given finite sample $P$ of the model $(\BB, \See)$,
$\llce$ are bounded on all contaminations $D$ of $P$ 
with $\ve(D)\le\bp$. I.e., for any given sample $P$ of $(\BB, \See)$,
there will be no sequence $s=(D_i)_{i=1}^\infty$ of contaminations of $P$ satisfying
$\ve(D_i)\le\bp, i=1,2,\ldots$ with the estimators diverging to infinity on $s$.

\subsection{The MCD Algorithm}\label{secMcd}
The {\em Minimum Covariance Determinant} ({\em MCD}) estimator, see \cite{mcd}, is one of the
most popular multidimensional location and scatter estimators with a high BP.
The idea is to select from the full dataset $D$ of size $n$  only a particular
subset $E$ of size $h$, chosen in such a way that we can have reason to hope
$E$ doesn't contain any outliers, discard the rest, which thus should contain
all the outliers, and then proceed with this subset $E$. The size $h$ of $E$ is
a configuration parameter. The subset is chosen as the one set among all the
subsets of size $h$ that minimizes the determinant of its sample covariance
matrix. More formally, let $E^\prime$ be a dataset of size $h$ and
$S(E^\prime)$ be the sample covariance matrix of $E^\prime$. Then
\begin{align*}
    E &:= \argmin_{E^\prime\in\cD_h} \{\det(S(E^\prime))\}\\
    \cD_h &:=  \{E^\prime|E^\prime\subset D \wedge |E^\prime| = h\}.
\end{align*}
The location estimator of MCD is then simply the mean of $E$, and the scale
estimator is the sample covariance matrix $S(E)$, multiplied with some constant
to make it consistent in the Gaussian case.

Larger values of $h$ increase the efficiency of MCD and lower values the
robustness.  If $d$ is the dimension of the sample points, then it can be
shown, see \cite{daviesMcdBp}, that, using the bracket notation
$\lfloor\ldots\rfloor$ to indicate the ``floor'' function, for $h =
\lfloor\frac{n + d + 1}{2}\rfloor$ the BP will attain its maximum of
\begin{equation*}
    BP_{MCD}^{max} = \frac{m^\ast_{max}}{n},\quad m^\ast_{max}:=\left\lfloor\frac{n-d}{2}\right\rfloor.
\end{equation*}
Thus, $BP_{MCD}^{max}\to\frac{1}{2}, n\to\infty$.

MCD is available in various programming languages. The language R, as described in \cite{r2013r},
has even multiple implementations. We have used the package {\tt rrcov}, and
therein the function {\tt CovMcd}, see for instance \cite{rrcov}. The
most important parameter of {\tt CovMcd} is {\tt alpha} which sets
$h/n$. Its values are restricted to the interval $[0.5, 1]$, and the default is
set to $0.5$.

\subsection{Gamma Divergence Estimation}\label{secGde}
In this paper, we will also use the {\em Gamma Divergence Estimation} ({\em
GDE}) method, see \cite{fujiGamma}, for performing robust estimation. As the
name suggests, GDE is a divergence, i.e.\ some form of distance notion between
probability distributions, see for instance \cite{AyIG}, and can thus be
understood in an appealing information-geometric setting, see for example
\cite{egKom}. In this interpretation, a parameter estimator for a given model
family $\cM$ is a projection, in the space $\cP$ of probability distributions,
to the distributions belonging to the family $\cM$, which can itself be
understood as a submanifold in $\cP$. In this view, a finite dataset $D$
is described as a distribution that is the normalized sum of the
Dirac delta distributions of all the points in $D$, which is sometimes called
the {\em empirical probability density function} ({\em EPDF}). This $D\in\cP$ is
positioned outside of the model family submanifold $\cM$ and we would like to
find its ``projection'' to $\cM$, i.e., the point in $\cM$ that lies nearest to $D$ with respect to a given
divergence. Note that this is just an intuitive description. For a mathematically
rigorous account of divergences and estimators in an information-geometric
setting see \cite{AyIG}.

There is a large variety of divergences to choose from, each resulting in a
different parameter estimator. The task is to define some desired
requirements, like consistency, efficiency, or robustness,
and then to find those divergences that result in estimators that fulfill those
requirements best. Comprehensive studies of this task can be found in, e.g.,
\cite{basu}, which also describes the popular {\em density power divergences},
or in the more recent book \cite{egKom}, which also contains a detailed account of GDE.

Following \cite{fujiGamma}, we give a short overview of GDE. Let
\begin{equation*}
    \cM = \{f(x; \theta)| f \mbox{ is a density in } x\in Q,\;\theta\in\Theta\}
\end{equation*}
be some model family, where $Q$ is some probability measure space and $\Theta$
some parameter space. Then, for $\gamma\in\R, \gamma>0$, the
{\em $\gamma$ divergence} $D_\gamma(h_1, h_2)$  of two densities $h_1, h_2$ is
defined as:
\begin{multline*}
    D_\gamma(h_1, h_2) := \frac{1}{\gamma(1+\gamma)}\log\int h_1^{1+\gamma}\,dx\;+\\
                          -\frac{1}{\gamma}\log\int h_1(x)h_2(x)^\gamma dx\;+
                          \frac{1}{1+\gamma}\log\int h_2(x)^{1+\gamma} dx.
\end{multline*}
Now, let
\begin{equation*}
    g(x) = (1-\ve)f(x;\theta) + \ve c(x),
\end{equation*}
be a contamination of $f(x;\theta)\in\cM$, where $\ve \ge 0$ is the contamination rate and
$c(x)$ is some contamination density. To create a robust estimator for $\cM$, $D_\gamma$
needs to satisfy $D_\gamma(g(x), f(x; \theta^\prime)) \approx D_\gamma(f(x;\theta),
f(x;\theta^\prime))$ for any $f(x;\theta), f(x;\theta^\prime)\in\cM$, thus mostly
disregarding the contamination $c(x)$. The value of $\gamma$ is a configuration
parameter that has to be set by the user. Roughly, larger values of $\gamma$
result in more robust but less efficient estimators. The optimal choice depends on the
situation, but values in the interval $[0.2, 0.4]$ have quite generally worked
well for us. Finding the best $\gamma$ for a given scenario is an
open problem.

In practice, we only have a finite sample $\{x_i\}_{i=1}^n$ of $g(x)$. Then, the
divergence is approximated using the EPDF $\bar g(x)$ of $g(x)$:
\begin{equation*}
    D_\gamma(\bar g(x), f(x;\theta)) = -\frac{1}{\gamma}\log\left(\frac{1}{n}\sum_{i=1}^n f(x_i;\theta)^\gamma\right) +\\
        \frac{1}{1+\gamma}\log\int f(x;\theta)^{1+\gamma}dx + const.
\end{equation*}
The computation of the BP of GDE is a bit involved, but in \cite{fujiGamma} the
authors state that the BP can be regarded as $1/2$ when the parametric density
is normal.

We use GDE for the robust estimation of the multivariate covariance matrices
$\CC^k_\xx$, presuming that $\xx$ is approximately normally distributed. In this
case, GDE can be computed using the {\em concave-convex procedure} (CCP), see
\cite{ccp}. The iteration is described in \cite{egKom}, but no implementation
has been available. Thus, we make the source code of our implementation freely
available, see section (\ref{secPa}) for details.

\subsection{Distance Metric for Covariance Matrices}\label{secMetric}
There is a multitude of possible distance measures in the space of covariance
matrices. See, e.g., \cite{arsigny2006log} or \cite{dryden2009non}. However, in
this paper, we will stick to a classic measure, the {\em Frobenius norm}, when
investigating the robustness of the LLC estimators. This is in line with the
majority of publications, see e.g. \cite{shrink} and \cite{shrinkNL}. We will
also use the {\em relative Frobenius error} ({\em RFE}) which is the ratio of the
Frobenius norm of the estimation residual and the Frobenius norm of the true
value.

\section{Theoretical Analysis}
Here we will investigate the theoretical robustness properties of LLC.

\subsection{Different Types of Contamination}
The estimation of the model $(\BB, \SSigma_\ee)$ is based on the estimation of
the vector $\bt$ from (\ref{ttb}), which in turn is obtained from submatrices $\TT^k$ of the
experimental covariance matrices $\Cxk$ as given in (\ref{ttk}). Thus, we
need to establish which data the matrices $\Cxk$ depend on, to understand what origins
of contamination we have to deal with.

Using (\ref{llceqExp}) and recalling that we presume weak stability, we have 
\begin{equation*}
    \xx = (\II - \UU_k\BB)^{-1}(\UU_k\ee + \JJ_k\cc).
\end{equation*}
which allows us to write the covariance of the measurements $\xx$ in the $k$th
experiment as follows (we use $\SSigma_\cc:= \cov(\cc)$):
\begin{align}
    \Cxk &= (\II - \UU_k\BB)^{-1} \cov(\UU_k\ee + \JJ_k\cc) (\II-\UU_k\BB)^{-T}\nonumber\\
        &= (\II - \UU_k\BB)^{-1} (\UU_k\SSigma_\ee\UU_k + \JJ_k\SSigma_\cc\JJ_k) (\II-\UU_k\BB)^{-T}.\label{covx2}
\end{align}
This means that the $\Cxk$ depends on both $\ee$ and $\cc$. I.e., we can
consider three types of contamination: a contamination of $\ee$, $\cc$, or $\xx$.

First, note that LLC is not bound to any constraints on the distribution of
$\ee$, which means that the estimation of $\BB$ is unaffected by contaminations
of $\ee$ (see also the proof of Lemma \ref{lemmaIntC}). However, for the
estimation of $\SSigma_\ee$, contaminations of $\ee$ are, of course, relevant.

Next, we turn to contaminations of the random vector $\cc$ of interventions.

\begin{lemma}\label{lemmaIntC}
    The matrix $\TT^k$, in general, depends on the distribution of the random vector $\cc$.
\end{lemma}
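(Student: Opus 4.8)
The plan is to compute $\TT^k$ explicitly from (\ref{covx2}) and simply read off its dependence on $\SSigma_\cc := \cov(\cc)$. First I would reorder the coordinates so that the unintervened indices $\cU_k$ come first and the intervened indices $\cJ_k$ second. Since $\UU_k$ annihilates exactly the rows of $\BB$ belonging to $\cJ_k$, in this ordering $\II - \UU_k\BB$ becomes block upper triangular, with $\II_{\cU_k\cU_k} - \BB_{\cU_k\cU_k}$ in the upper-left block, $-\BB_{\cU_k\cJ_k}$ in the upper-right block, and the identity $\II_{\cJ_k\cJ_k}$ in the lower-right block. In the same ordering the inner matrix $\cov(\UU_k\ee + \JJ_k\cc) = \UU_k\See\UU_k + \JJ_k\SSigma_\cc\JJ_k$ is \emph{block diagonal}, carrying $(\See)_{\cU_k\cU_k}$ in its $\cU_k\cU_k$ block and $(\SSigma_\cc)_{\cJ_k\cJ_k}$ in its $\cJ_k\cJ_k$ block, with the off-diagonal blocks vanishing because $\UU_k$ and $\JJ_k$ have disjoint supports.

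Next I would invert the block upper triangular matrix by the standard formula and substitute into (\ref{covx2}), then extract the block $(\Cxk)_{\cU_k\cJ_k}$ that defines $\TT^k$ according to (\ref{ttk}). Carrying out the two block multiplications, the $\See$-dependent contribution lands entirely in the $\cU_k\cU_k$ block of $\Cxk$, while the $\cU_k\cJ_k$ block collapses to the single clean term
\[
    \TT^k = (\II_{\cU_k\cU_k} - \BB_{\cU_k\cU_k})^{-1}\,\BB_{\cU_k\cJ_k}\,(\SSigma_\cc)_{\cJ_k\cJ_k}.
\]
In the uncontaminated model one has $\SSigma_\cc = \II$, so $(\SSigma_\cc)_{\cJ_k\cJ_k} = \II_{\cJ_k\cJ_k}$ and $\TT^k$ reduces to the familiar total-effect matrix, independent of $\cc$. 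But the displayed identity exhibits $\TT^k$ as right-multiplied by the full $\cJ_k\cJ_k$ block of $\SSigma_\cc$, so any contamination that perturbs the covariance of the intervention vector propagates directly into $\TT^k$.

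To establish the qualifier \emph{in general}, I would finish with a minimal witness: take $d=2$ and intervene on $x_2$, so $\cJ_k = \{2\}$ and $\cU_k = \{1\}$; then the formula gives $\TT^k = b_{12}\,(\SSigma_\cc)_{22}$, which changes whenever the variance of the intervening variable deviates from one. This shows the dependence is genuine precisely when $\BB_{\cU_k\cJ_k} \neq \zero$, i.e.\ whenever the intervened variables actually exert direct effects. The main point requiring care is the block bookkeeping of the final step: one must verify that the $\See$-term contributes only to the $\cU_k\cU_k$ block, so that the $\cc$-dependence isolated in $\TT^k$ is not accidentally entangled with, or cancelled by, the disturbance covariance. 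Everything else is routine linear algebra, and no weak-stability issue arises since $\II_{\cU_k\cU_k} - \BB_{\cU_k\cU_k}$ inherits invertibility from $\II - \UU_k\BB$.
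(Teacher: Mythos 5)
Your proof is correct, but it takes a genuinely different computational route from the paper's. The paper stays in the original coordinates and works with the projection matrices directly: it derives the operator identity $(\II-\UU_k\BB)^{-T}\JJ_k = \JJ_k$ from $\JJ_k\UU_k = \zero$, reduces $\TT^k$ to $\bigl(\UU_k(\II - \UU_k\BB)^{-1} \JJ_k\SSigma_\cc\JJ_k\bigr)_{\cU_k\cJ_k}$ — a formula that still contains the unresolved inverse $(\II-\UU_k\BB)^{-1}$ — and then finishes by substituting the specific model $\BB$ with $b_{12}=b_{21}=1$ and the intervention $\cJ_k=\{1\}$, asserting that the substitution exhibits the dependence on $\SSigma_\cc$. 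You instead permute coordinates so that $\II - \UU_k\BB$ becomes block upper triangular, invert it by the standard block formula, and multiply everything out, which yields the fully explicit closed form
\begin{equation*}
    \TT^k = (\II_{\cU_k\cU_k} - \BB_{\cU_k\cU_k})^{-1}\,\BB_{\cU_k\cJ_k}\,(\SSigma_\cc)_{\cJ_k\cJ_k}.
\end{equation*}
This buys you two things the paper's proof does not make explicit: first, the dependence on $\cc$ is visibly just right-multiplication by $(\SSigma_\cc)_{\cJ_k\cJ_k}$, with the $\See$-dependence confined to the $\cU_k\cU_k$ block of $\Cxk$; second, you obtain the exact characterization of when the dependence is genuine, namely $\BB_{\cU_k\cJ_k}\neq\zero$ (since $(\II-\BB_{\cU_k\cU_k})^{-1}$ is invertible by weak stability), which makes the lemma's ``in general'' qualifier precise rather than example-driven. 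Your witness ($d=2$, $\cJ_k=\{2\}$, giving $\TT^k = b_{12}(\SSigma_\cc)_{22}$) matches the paper's in spirit; just state explicitly that you choose a model with $b_{12}\neq 0$, since for $b_{12}=0$ the dependence vanishes — you note this condition, but the witness should fix such a $\BB$ up front to be a complete example.
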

\begin{proof}
    For better insight, we derive a formula that describes the dependence of $\TT^k$ on $\cc$.
    First, we note that
    $(\Cxk)_{\cU_k\cJ_k} = (\UU_k\Cxk\JJ_k)_{\cU_k\cJ_k}$, i.e. it
    suffices to consider $\UU_k\Cxk\JJ_k$. Combining (\ref{covx2}) and
    (\ref{ttk}), we then  obtain:
    \begin{equation}
            \TT^k = \Big(\UU_k(\II - \UU_k\BB)^{-1} (\UU_k\SSigma_\ee\UU_k + \JJ_k\SSigma_\cc\JJ_k)
                         (\II-\UU_k\BB)^{-T}\JJ_k\Big)_{\cU_k\cJ_k}\label{longttk}.
    \end{equation}
    Here, we first focus on the last two factors, $(\II - \UU_k\BB)^{-T}\JJ_k$:
    \begin{align}
        (\II-\UU_k\BB)(\II-\UU_k\BB)^{-1} &= \II\nonumber\\
        \JJ_k(\II-\UU_k\BB)(\II-\UU_k\BB)^{-1} &= \JJ_k\II\nonumber\\
        \JJ_k(\II-\UU_k\BB)^{-1} - \JJ_k\UU_k\BB(\II-\UU_k\BB)^{-1} &= \JJ_k\nonumber\\
        \JJ_k(\II-\UU_k\BB)^{-1} &= \JJ_k    \label{jkthetaisjk}\\
        (\II-\UU_k\BB)^{-T}\JJ_k &= \JJ_k    \label{thetatjk},
    \end{align}
    where (\ref{jkthetaisjk}) follows because $\JJ_k\UU_k=\zero$.
    Next, (\ref{longttk}) and (\ref{thetatjk}) obtain:
    \begin{align}
        \TT^k &= \Big(\UU_k(\II - \UU_k\BB)^{-1} (\UU_k\SSigma_\ee\UU_k + \JJ_k\SSigma_\cc\JJ_k)\JJ_k
                     \Big)_{\cU_k\cJ_k}\nonumber\\
              &= \Big(\UU_k(\II - \UU_k\BB)^{-1} (\zero + \JJ_k\SSigma_\cc\JJ_k)\Big)_{\cU_k\cJ_k}\label{zeroStep}\\
              &= \Big(\UU_k(\II - \UU_k\BB)^{-1} \JJ_k\SSigma_\cc\JJ_k\Big)_{\cU_k\cJ_k},\label{finalttk}
    \end{align}
    where (\ref{zeroStep}) holds because $\UU_k\JJ_k = \zero$ and $\JJ_k\JJ_k=\JJ_k$.
    This reconfirms our claim above that the estimation of $\BB$ is independent of the distribution of $\ee$.

    Equation (\ref{finalttk}) strongly indicates that this lemma holds. To be precise, we have to provide an example showing
    that $\TT^k$ is not a constant function of $\SSigma_\cc$.
    Such an example is given by:
    \begin{equation*}
        \BB = \left(\begin{matrix}0 & 1 \\ 1 & 0\end{matrix}\right),
    \end{equation*}
    and the experiment
    \begin{equation*}
        \JJ_k = \left(\begin{matrix}1&0\\0&0\end{matrix}\right), \quad
        \UU_k = \left(\begin{matrix}0&0\\0&1\end{matrix}\right).
    \end{equation*}
    A substitution of those matrices into (\ref{finalttk}) shows that changes
    in $\SSigma_\cc$ can affect $\TT^k$. 
\end{proof}

Finally, it is obvious from (\ref{ttk}) that contaminations of $\xx$ affect the
estimation of $\bt$.

Thus, we have shown that the estimation of $\BB$ can be influenced by
contaminations of $\cc$ and $\xx$, but not of $\ee$. The estimate
$\hat\SSigma_\ee$ is obtained via the equation
\begin{equation}
    \hat\SSigma_\ee = (\II - \hat\BB)\hat\CC_\xx^0(\II - \hat\BB)^T,\label{estSigmaE}
\end{equation}
where $\hat\CC_\xx^0$ is the estimator of the covariance of $\xx$ in the purely
observational experiment. That means, that $\hat\SSigma_\ee$ can be affected by
all three types of contaminations, possibly indirectly through the estimate
$\hat\BB$. Again, to be precise, one would have to give an example, showing
that (\ref{estSigmaE}) is not a constant function in $\xx$, $\ee$, or $\cc$,
but this works analogously to the proof of Lemma (\ref{lemmaIntC}) and will
not be repeated here.

To summarize, we have the following proposition:
\begin{proposition}\label{propContTypes}
    The estimator $\BBh$ is not affected by contaminations of $\ee$ but is
    affected by contaminations of $\cc$ and $\xx$. The estimator $\Seeh$ is
    affected by all three types of contaminations.
\end{proposition}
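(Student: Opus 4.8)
The plan is to read off all six assertions from the functional dependencies already exposed in this subsection, since each one reduces to the question of whether a particular estimator is a constant function of the distribution of the contaminated variable. Because $\BBh$ is obtained by solving the linear system $\bt = \TT\bb$ of (\ref{ttb}), whose entries are assembled entirely from the blocks $\TT^k$, it suffices to track how each $\TT^k$ responds to contaminations of $\ee$, $\cc$, and $\xx$; the dependencies then propagate through the linear solve to $\BBh$.

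First I would settle the three claims about $\BBh$. Independence from $\ee$ is immediate from the closed form (\ref{finalttk}), $\TT^k = (\UU_k(\II - \UU_k\BB)^{-1}\JJ_k\SSigma_\cc\JJ_k)_{\cU_k\cJ_k}$, in which $\SSigma_\ee$ simply does not appear, so every $\TT^k$ — and hence $\BBh$ — is constant in the distribution of $\ee$. The dependence on $\cc$ is precisely Lemma \ref{lemmaIntC}, whose explicit $2\times 2$ counterexample already certifies that varying $\SSigma_\cc$ moves some $\TT^k$ and therefore $\BBh$. The dependence on $\xx$ is read directly off (\ref{ttk}): each $\TT^k$ is a submatrix of the empirical covariance $\Cxk$ of the measurements, so any contamination of $\xx$ that perturbs $\Cxk$ perturbs $\TT^k$ and thus $\BBh$.

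Next I would treat $\Seeh$ through its defining relation (\ref{estSigmaE}), $\hat\SSigma_\ee = (\II - \hat\BB)\hat\CC_\xx^0(\II - \hat\BB)^T$. Two of its dependencies are inherited for free: as a function of $\BBh$, the estimator $\Seeh$ is affected by contaminations of $\cc$ and of $\xx$ precisely because $\BBh$ is. The remaining dependence on $\ee$ enters through the other factor $\hat\CC_\xx^0$, since in the purely observational experiment $\CC_\xx^0 = (\II - \BB)^{-1}\SSigma_\ee(\II - \BB)^{-T}$ genuinely involves $\SSigma_\ee$; thus contaminating $\ee$ perturbs $\hat\CC_\xx^0$ and hence $\Seeh$, even while leaving $\BBh$ untouched. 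This closes all six cases.

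The one delicate point — and the place where a little care is needed rather than a deep argument — is the gap between ``$\SSigma_\cc$ (or $\SSigma_\ee$, or $\xx$) appears in a formula'' and ``the estimator is genuinely \emph{affected}'': to certify non-constancy one must exhibit at least one configuration in which moving the contaminated quantity actually moves the estimator. For the $\cc$-dependence of $\BBh$ this is supplied by Lemma \ref{lemmaIntC}; for the $\xx$- and $\ee$-dependencies of $\Seeh$ the same small model, equipped with an observational covariance, makes the non-constancy explicit, and I would present one such instance rather than grind through the general algebra. The negative claim — independence of $\BBh$ from $\ee$ — requires no example at all, as (\ref{finalttk}) already gives exact constancy.
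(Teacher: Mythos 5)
Your proposal is correct and follows essentially the same route as the paper: independence of $\BBh$ from $\ee$ via the closed form (\ref{finalttk}), the $\cc$-dependence via Lemma \ref{lemmaIntC}, the $\xx$-dependence read off (\ref{ttk}), and all three dependencies of $\Seeh$ through (\ref{estSigmaE}). Your closing remark about distinguishing formal appearance in a formula from genuine non-constancy matches the paper's own caveat that a fully rigorous argument would exhibit explicit examples analogous to the one in Lemma \ref{lemmaIntC}.
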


\subsection{Breakdown Point of the Estimation of \texorpdfstring{$\BB$}{B}}
As described in Section \ref{llcsum}, the estimator $\BBh$ proceeds as follows.
For each experiment $\emt$, we estimate the covariance
matrix $\Cxk$, which contains certain total effects in the submatrix $\TT^k$ in (\ref{ttk}). Those total effects are
then used in the linear constraints (\ref{pathAna}), which are combined, over
all experiments, into (\ref{ttb}). Then, it follows from the pair condition that the
inhomogeneity $\bt$ in (\ref{ttb}) contains all $d^2-d$ possible total effects.

Since the matrix $\TT$ contains only total effects and zeros, we have that
$\TT$ is a function of the inhomogeneity $\bt$, $\TT = \TT(\bt)$. Thus,
\begin{align}
        \bt &= \TT(\bt)\bb,\label{solve4b}\\
        \bb &= \TT^{-1}(\bt)\bt\label{solve4bMPI},
\end{align}
where $\TT^{-1}(\bt)$ is the {\em Moore-Penrose pseudoinverse}. It should be
stressed that, even though the structure of (\ref{solve4b}) is similar to that
of linear regression problems, our situation is different, some of the largest
differences being that the matrix is a function of the inhomogeneity and that
our problem is unsupervised. Note that
\begin{equation*}
    f(\bt):=\TT^{-1}(\bt)\bt
\end{equation*}
is a function of the total effects $\bt$ and that
this function is nonlinear. Furthermore, note, that different experiments can
contribute values for the same total effects, and that those values are likely
to be different due to finite data. Thus, $\bt$ can contain different versions
of the same total effect. As a result, (\ref{solve4b}) doesn't have to be true
anymore, but the Moore-Penrose pseudoinverse in (\ref{solve4bMPI}) still gives
an approximation of $\bb$ by using the projection of $\bt$ to
$\range(\TT(\bt))$, see \cite{llc}, p.3407.

The LLC algorithm uses the standard estimator for covariance, called the {\em
Sample Covariance Matrix} ({\em SCM}), to estimate $\Cxk$. It is well known
that SCM is non-robust with a BP of zero. A single unbounded
element of the sample makes SCM unbounded. Thus, since in LLC, the total effects
$\bt$ are coefficients of those estimated covariance matrices $\Cxk$, if the
function $f(\bt)$ is unbounded on at least one of the paths in which
contaminations can drive $\bt$ to infinity, the estimator $\BBh$ has BP zero,
too.
\begin{lemma}\label{lemmaFUnbAtInf}
    Contaminations of $\xx$ can drive $f(\bt)$ to infinity.
\end{lemma}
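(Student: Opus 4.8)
The plan is to prove this existence statement by exhibiting one explicit model, one set of experiments meeting the pair condition, and one admissible sequence of $\xx$-contaminations along which $f(\bt)$ is unbounded; by the discussion preceding the lemma, this is all that is required. The guiding idea is to choose the instance so that the map $f$ degenerates to something transparent, thereby isolating the effect of the diverging total effects from the confounding nonlinearity of the pseudoinverse.

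Concretely, I would take $d=2$ with the two single-node experiments $\cJ_1=\{1\},\cU_1=\{2\}$ and $\cJ_2=\{2\},\cU_2=\{1\}$. For any $\BB$ with zero diagonal, the matrices $\II-\UU_k\BB$ are triangular with unit diagonal, hence invertible, so the system is weakly stable. Because each $\cU_k$ is a singleton, the sum in (\ref{pathAna}) is empty, giving $t(x_1\rightsquigarrow x_2||\cJ_1)=b_{21}$ and $t(x_2\rightsquigarrow x_1||\cJ_2)=b_{12}$. Thus both off-diagonal direct effects are recovered (the pair condition holds), $\TT(\bt)$ is a constant permutation matrix independent of $\bt$, and $f(\bt)=\TT^{-1}\bt$ is linear.

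Next I would drive a component of $\bt$ to infinity through $\xx$. By (\ref{ttk}) the total effect $t(x_1\rightsquigarrow x_2||\cJ_1)$ equals the off-diagonal entry $(\Cxk)_{21}$ of the experiment-$1$ covariance, which LLC estimates with the SCM. Since the SCM has breakdown point zero, replacing a single point of the experiment-$1$ sample by $(M,M)^T$ and letting $M\to\infty$ forces this empirical covariance, hence the corresponding entry of $\bt$, to diverge, while leaving the uncontaminated experiment-$2$ component fixed. This sequence has contamination rate $1/n$, which is $\le\bp$ for any prescribed $\bp>0$ once $n$ is large, so it is an admissible path. Because $f$ is here just a fixed permutation of its argument, the diverging component passes straight through and $f(\bt)\to\infty$, proving the lemma.

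The step I expect to be the real obstacle is controlling the Moore-Penrose pseudoinverse $\TT^{-1}(\bt)$: for a general instance the matrix $\TT(\bt)$ moves together with $\bt$ as the contaminated total effects grow, and one would have to exclude the possibility that the pseudoinverse contracts exactly the diverging directions and keeps $f$ bounded. The single-node-intervention construction sidesteps this entirely by rendering $\TT$ constant and invertible, so no cancellation can occur; this is precisely why I would build the argument around that minimal example rather than analysing $f$ on an arbitrary model.
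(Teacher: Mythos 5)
Your proof is correct and takes essentially the same approach as the paper: the paper's proof uses exactly this two-node model with the two single-node interventions, under which $\TT(\bt)=\II$ is constant and $f(\bt)=\bt$, so any diverging total effect passes straight through. The extra detail you supply --- driving the off-diagonal SCM entry to infinity with a single outlier $(M,M)^T$ and checking weak stability and the pair condition --- is exactly the mechanism the paper invokes in the text immediately preceding the lemma, so there is no substantive difference.
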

\begin{proof}
    We prove this by giving an example.
    Consider, e.g., a simple two-node
    model. The two possible singular interventions result in the following two
    equations:
        \begin{equation}\label{twoNodeEx}
            \begin{split}
                t_{12} &= b_{12}\\
                t_{21} &= b_{21},
            \end{split}
        \end{equation}
    i.e., $\TT(\bt) = \II$ and $f(\bt) = \bt$ is unbounded in all directions.
\end{proof}
The reference implementation of LLC in \cite{llc} also provides the option of
using $L_2$-regularization to solve (\ref{solve4b}). However, regularization
doesn't help:

\begin{lemma}
    The $L_2$-regularized estimator $\BBh$ is unbounded.
\end{lemma}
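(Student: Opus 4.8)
The plan is to reuse the two-node counterexample from the proof of Lemma~\ref{lemmaFUnbAtInf}, whose decisive feature was that $\TT(\bt)=\II$ for \emph{every} value of $\bt$. First I would make precise what the ``$L_2$-regularized estimator'' is: solving (\ref{solve4b}) with a ridge penalty of fixed strength $\lambda>0$ replaces the Moore--Penrose pseudoinverse of (\ref{solve4bMPI}) by
\begin{equation*}
    \bb = \argmin_{\beta}\left\{\|\bt - \TT(\bt)\beta\|^2 + \lambda\|\beta\|^2\right\} = \bigl(\TT(\bt)^{T}\TT(\bt) + \lambda\II\bigr)^{-1}\TT(\bt)^{T}\bt.
\end{equation*}
The key observation I would extract from this formula is that, \emph{for a fixed design} $\TT(\bt)$, the map $\bt\mapsto\bb$ is linear: it is multiplication by the fixed matrix $(\TT^{T}\TT+\lambda\II)^{-1}\TT^{T}$. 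Regularization therefore only contracts the solution toward the origin by a factor depending on $\TT$ and $\lambda$, but it cannot alter the fact that $\bb$ grows in proportion to $\bt$.

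Next I would substitute the two-node model, for which $\TT(\bt)=\II$ identically, so that the formula collapses to
\begin{equation*}
    \bb = (\II + \lambda\II)^{-1}\II\,\bt = \frac{1}{1+\lambda}\,\bt,
\end{equation*}
a fixed positive multiple of $\bt$. By the argument of Lemma~\ref{lemmaFUnbAtInf}, contaminations of $\xx$ drive $\bt$ to infinity along this very path, since the total effects collected in $\bt$ are coefficients of the SCM-estimated covariance matrices $\Cxk$, which have breakdown point zero. Hence $\bb$, and with it $\BBh$, is unbounded, and the breakdown point of the regularized estimator remains zero.

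The single place where one might expect a subtlety---and hence the only ``obstacle''---is the interaction between the penalty and the fact that $\TT$ is itself a function of $\bt$. One could hope that as $\bt$ grows the design $\TT(\bt)$ grows with it, so that the ridge denominator $\TT(\bt)^{T}\TT(\bt)+\lambda\II$ eventually dominates and tames the solution. The two-node example is chosen precisely to defeat this hope: there $\TT(\bt)$ does not depend on $\bt$ at all, so the penalty contributes nothing but the constant shrinkage factor $1/(1+\lambda)$, which is powerless against an unbounded $\bt$. Beyond selecting such a $\bt$-independent design, I expect no genuine difficulty; the remainder is the standard fact that ridge regression scales linearly with its response and so cannot bound a response that is itself unbounded.
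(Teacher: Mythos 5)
Your proposal is correct and follows essentially the same route as the paper: both use the two-node example with $\TT(\bt)=\II$ from Lemma~\ref{lemmaFUnbAtInf} and both conclude that the regularized solution is exactly $\bt/(1+\lambda)$, which inherits the unboundedness of $\bt$. The only difference is presentational: you invoke the closed-form ridge solution $(\TT^{T}\TT+\lambda\II)^{-1}\TT^{T}\bt$, whereas the paper derives the same minimizer by completing the square.
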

\begin{proof}
    We use the same simple example (\ref{twoNodeEx}) as in the proof of
    Lemma \ref{lemmaFUnbAtInf}. Let $L(\bb)$ denote the loss with
    regularization parameter $\lambda > 0$:
    \begin{equation*}
        L(\bb) = \|\TT(\bt)\bb - \bt\|^2 + \lambda\|\bb\|^2.
    \end{equation*}
    Then, a straightforward application of the technique of completing the
    square gives:
    \begin{align*}
        L(\bb) &= \|\TT(\bt)\bb - \bt\|^2 + \lambda\|\bb\|^2\\
               &= \|\bb - \bt\|^2 + \lambda\|\bb\|^2\\
               &= \langle\bb - \bt, \bb - \bt\rangle + \lambda\langle\bb, \bb\rangle\\
               &= (1+\lambda)\left(\bb^2 - \frac{2}{1+\lambda}\langle \bb, \bt\rangle + \frac{\bt^2}{1+\lambda}\right)\\
               &= (1+\lambda)\left(\left(\bb - \frac{\bt}{1+\lambda}\right)^2 - \frac{\bt^2}{(1+\lambda)^2} + \frac{\bt^2}{1+\lambda}\right)\\
               &= (1+\lambda)\left(\bb - \frac{\bt}{1+\lambda}\right)^2 + \frac{\lambda}{1+\lambda}\bt^2,
    \end{align*}
    which, noting $\lambda > 0$, is a convex function in $\bb$ with minimum at
    $\frac{\bt}{1+\lambda}$. Since $\frac{\bt}{1+\lambda}$ in unbounded,
    the regularized estimator $\BBh$ is unbounded, too. 
\end{proof}

We have thus shown, that both the regularized and unregularized estimations of
$\BB$ have a BP of zero. The origin of this nonrobustness is the nonrobustness of
the SCM estimation of the covariance matrices $\Cxk$. However, there is yet
another source of non-robustness. In fact:

\begin{lemma}\label{lemmaSingB}
The function $f(\bt)$ can have singularities.
\end{lemma}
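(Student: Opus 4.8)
The plan is to exploit the block-diagonal structure of $\TT(\bt)$ together with the fact that the Moore--Penrose pseudoinverse is discontinuous, and in fact unbounded, as a matrix approaches a drop in column rank from the full-rank side. On the open region of $\bt$-space where $\TT(\bt)$ has full column rank we may write $\TT^{-1}(\bt) = (\TT(\bt)^T\TT(\bt))^{-1}\TT(\bt)^T$, so that
\begin{equation*}
    f(\bt) = (\TT(\bt)^T\TT(\bt))^{-1}\TT(\bt)^T\,\bt
\end{equation*}
is a rational, hence smooth, function of $\bt$ whose only possible singularities lie on the zero locus of the polynomial $\det(\TT(\bt)^T\TT(\bt))$. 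Because $\TT$ is block diagonal, this determinant factors over the row-blocks, so it suffices to produce a single block whose determinant is a nonconstant polynomial in $\bt$ that actually vanishes somewhere. First I would therefore reduce the whole statement to exhibiting one such block.

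For the construction I would take the smallest nontrivial case, a three-node model, with the two singleton interventions $(\cJ,\cU)=(\{2\},\{1,3\})$ and $(\{3\},\{1,2\})$. Writing out the constraint (\ref{pathAna}) for the row $u=1$ in each experiment (so that $t_{ui}$ denotes the total effect of the intervened $x_i$ on $x_u$) gives
\begin{equation*}
    t_{12} = b_{12} + t_{32}\,b_{13}, \qquad t_{13} = b_{13} + t_{23}\,b_{12},
\end{equation*}
whence the row-$1$ block of $\TT(\bt)$ and its determinant are
\begin{equation*}
    \begin{pmatrix} 1 & t_{32}\\ t_{23} & 1\end{pmatrix},
    \qquad \det = 1 - t_{32}t_{23}.
\end{equation*}
This determinant is genuinely nonconstant in the total effects: it equals $1$ when $t_{32}=t_{23}=0$ and vanishes on the nonempty hypersurface $t_{32}t_{23}=1$. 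Hence $\TT(\bt)$ drops column rank precisely there while retaining full rank on a neighborhood, which is exactly the configuration needed for a singularity.

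It then remains to check that $f$ actually blows up, i.e.\ that the pole is not removable. Since the block is square, its pseudoinverse coincides with its inverse wherever $1-t_{32}t_{23}\ne 0$, and I would approach the hypersurface along a path on which the relevant component of the right-hand side stays bounded away from zero, e.g.\ $t_{32}=1$, $t_{23}=1-\eta$, $t_{12}=\delta\ne 0$, $t_{13}=0$, with $\eta\to 0$. A one-line computation then yields a row-$1$ solution of order $\delta/\eta$, which diverges, establishing the singularity. To anchor the example in a genuinely weakly stable model whose true total-effect vector already lies on the locus, I would use the two-cycle $b_{23}=b_{32}=1$ with node $1$ decoupled; then $t_{32}=t_{23}=1$, and $\II-\UU_k\BB$ is invertible in both experiments, so the configuration is admissible for LLC.

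The hard part, I expect, is not the algebra but the interpretive step: the true total-effect vector of a model is constrained to lie in the image of the map $\BB\mapsto\bt$, so one must argue that the singular locus is actually relevant, i.e.\ reachable. The clean resolution is to recall that the $\Cxk$ are estimated by the SCM, whose breakdown point is zero, so contamination can perturb the estimated $\bt$ essentially arbitrarily and in particular drive it off the constraint surface toward the locus $t_{32}t_{23}=1$. Thus the singularities of $f$ constitute a second, independent mechanism---beyond the unboundedness of Lemma \ref{lemmaFUnbAtInf}---by which contaminations force $\BBh$ to blow up. I would close by noting that the argument is local and generic, so the same phenomenon persists for larger systems satisfying the pair condition, where the singular set is a measure-zero hypersurface that the estimated $\bt$ may nonetheless approach.
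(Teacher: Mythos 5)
Your proposal is correct and follows essentially the same route as the paper: the identical three-node example, the same $2\times 2$ block $\bigl(\begin{smallmatrix}1 & t_{32}\\ t_{23} & 1\end{smallmatrix}\bigr)$ with singular locus $t_{32}t_{23}=1$, and the same verification that the blow-up is genuine by keeping the numerator (your $t_{12}=\delta\ne 0$, the paper's $t_{12}-t_{32}t_{13}$ bounded away from zero) nonvanishing as the determinant tends to zero. The extra scaffolding you add (the rational-function framing, the weakly stable anchor model, the contamination remark) is harmless but not needed, and matches the paper's own closing observation that contamination, not the true model, drives $\bt$ toward the singular set.
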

\begin{proof}
    We show this again by giving an example. Let's consider a three-node model
    with three single-node experiments, i.e., each experiment intervenes on
    exactly one node. Then, the matrix $\TT(\bt)$ has the following block-diagonal
    structure:
    \begin{equation*}
        \TT(\bt) = \left(\begin{matrix}
                   1 & t_{32} &      0 &      0 &      0 &      0 \\
              t_{23} &      1 &      0 &      0 &      0 &      0 \\
                   0 &      0 &      1 & t_{31} &      0 &      0 \\
                   0 &      0 & t_{13} &      1 &      0 &      0 \\
                   0 &      0 &      0 &      0 &      1 & t_{21} \\
                   0 &      0 &      0 &      0 & t_{12} &      1 
        \end{matrix}\right),
    \end{equation*}
    where we use the notation $t_{ui}$ to indicate the total effect of the
    intervened node $x_i$ on the non-intervened one $x_u$.  Here, the columns belong to the
    coefficients $b_{12}, b_{13}, b_{21}, b_{23}, b_{31}, b_{32}$, resp, and
    the rows belong to the total effects $t_{12}, t_{13}, t_{21}, t_{23},
    t_{31}, t_{32}$, resp. 

    Note, that the inverse matrix $\TT^{-1}(\bt)$ has also block diagonal structure,
    with each block being the inverse of the pertinent block in $\TT(\bt)$.
    Whence, each of the diagonal $(2\times 2)$ blocks can be used to create a
    singularity, e.g., in the first block, by setting $t_{23} = t_{32}^{-1}$.
    The situation is a bit similar to the problem of multicollinearity in linear regression.
    But, to be precise, we have to show that not only $\TT^{-1}(\bt)$ becomes singular, but
    $\TT^{-1}(\bt)\bt$ does so, too. To this end, consider, for instance, the first block.
    Using the formula for the two-dimensional matrix inverse, we get:
    \begin{equation*}
        \left(\begin{matrix}b_{12}\\b_{13}\end{matrix}\right) = \frac{1}{1-t_{32}t_{23}}
            \left(\begin{matrix}
                1       & -t_{32}\\
                -t_{23} & 1  
        \end{matrix}\right)
        \left(\begin{matrix}t_{12}\\t_{13}\end{matrix}\right).
    \end{equation*}
    Thus, letting $t_{23}\to 1/t_{32}$, the denominator converges to zero.
    Now, it can easily happen that $t_{12} - t_{32}t_{13}$ is bounded away
    from zero since there is no deterministic connection between those total effects. Thus, we
    obtain a singularity of $f(\bt)$ for finite $\bt$. 
\end{proof}

Thus, we have proven the following proposition:
\begin{proposition}\label{propBBhat}
    The estimator $\BBh$ is not robust, having a BP of zero. Even using an
    estimator $\Cxkh$ with nonzero BP would, in general, still result in an
    estimator $\BBh$ with BP zero.
\end{proposition}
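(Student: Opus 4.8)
The plan is to assemble the proposition from the three lemmas already established, treating its two assertions separately. For the first assertion, that $\BBh$ has breakdown point zero, I would combine the non-robustness of the SCM with Lemma \ref{lemmaFUnbAtInf}. Concretely, fix any $\bp > 0$ and choose a sample size $n$ with $1/n \le \bp$. Since a single unbounded sample point already drives the SCM to infinity, replacing one of the $n$ points by a contaminating point and letting it tend to infinity yields a sequence of contaminations, each of rate $1/n \le \bp$, along which at least one entry of some $\Cxk$, and hence at least one total effect contained in $\bt$, diverges. By Lemma \ref{lemmaFUnbAtInf} the two-node example gives $\TT(\bt) = \II$, so $f(\bt) = \bt$ is unbounded and $\BBh = f(\bt)$ diverges along this sequence. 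As $\bp$ was arbitrary, no positive breakdown point can exist, so $\bp = 0$.

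For the second assertion I would first isolate precisely what robustifying the covariance step does and does not achieve. Replacing the SCM by an estimator $\Cxkh$ with nonzero breakdown point $\beta$ guarantees that, for contamination rates below $\beta$, every estimated covariance matrix, and therefore the whole vector $\bt$, stays bounded. This blocks exactly the divergence path of Lemma \ref{lemmaFUnbAtInf}. The logical crux I want to exploit is that it leaves untouched the second, independent source of non-robustness from Lemma \ref{lemmaSingB}: the map $f$ has singularities at \emph{finite} values of $\bt$. In other words, boundedness of $\bt$ does not transfer to boundedness of $f(\bt)$, because $f$ is not bounded on bounded sets that meet the singular locus. These singularities are intrinsic to the total-effects-to-direct-effects inversion $\TT^{-1}(\bt)\bt$ and are therefore immune to any robustification of the covariance estimate.

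To turn this into a breakdown statement I would use the three-node model of Lemma \ref{lemmaSingB}, whose first block contributes the factor $1/(1 - t_{32}t_{23})$ to $f$. I would fix the true model just off the singular locus $\{t_{32}t_{23} = 1\}$, off it so that $\TT(\bt)$ retains full column rank and LLC is well posed, and then exhibit, for every $\bp > 0$, a sample $P$ and a sequence of rate-$\le\bp$ contaminations along which the bounded robust estimates $\hat t_{32}, \hat t_{23}$ are steered so that $\hat t_{32}\hat t_{23} \to 1$, while the numerator $t_{12} - t_{32}t_{13}$ stays bounded away from zero, as already observed in the proof of Lemma \ref{lemmaSingB}. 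Since $\hat\bt$ remains bounded throughout but approaches the finite singular locus, $f(\hat\bt) = \BBh$ diverges, establishing $\bp = 0$ even for the robustified estimator.

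The main obstacle is this last step. Unlike the SCM case, where one simply pushes a contaminating point to infinity, the robust $\Cxkh$ keeps $\bt$ bounded, so I must instead steer the bounded estimate $\hat\bt$ toward a finite singular point, which requires an argument that the contamination can bias $\hat\bt$ by a controllable amount in the direction of the locus. This is why the true model is placed arbitrarily close to, but not on, the locus, so that the small bias achievable at an arbitrarily small rate already suffices to reach it. Making the reachable-bias argument quantitative for a generic high-breakdown $\Cxkh$ is the delicate part, and it is precisely what the qualifier ``in general'' in the statement is meant to carry: robustifying the covariance step does not universally cure the breakdown of $\BBh$, rather than failing for every model.
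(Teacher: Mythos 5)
Your proposal is correct and takes essentially the same route as the paper: the paper's own ``proof'' of Proposition \ref{propBBhat} is just the sentence ``Thus, we have proven the following proposition,'' i.e.\ it assembles the result exactly as you do --- the first claim from the BP-zero property of SCM combined with Lemma \ref{lemmaFUnbAtInf}, and the second claim from the finite singularities of $f(\bt)$ in Lemma \ref{lemmaSingB}. If anything, you are more explicit than the paper about the remaining quantitative gap in the second claim (showing that bounded, small-rate contamination can actually steer a robust estimate $\Cxkh$ onto the singular locus, with the model chosen near that locus), a gap the paper likewise buries in the qualifier ``in general.''
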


\subsection{Breakdown Point of the Estimation of \texorpdfstring{$\See$}{the disturbance covariance matrix}}
The situation for the estimator $\Seeh$ is more straightforward: Using
(\ref{SeeEq}), it is given by:
\begin{equation}
    \Seeh = (\II - \hat\BB)\hat\CC_\xx^0(\II - \hat\BB)^T.\label{SigmaEObs}
\end{equation}
We then have:
\begin{proposition}\label{propSeeh}
    The estimator $\Seeh$ has a BP equal to zero. Even using an estimator
    $\Cxkh$ with nonzero BP would, in general, still result in an estimator
    $\Seeh$ with BP zero.
\end{proposition}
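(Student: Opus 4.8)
The plan is to reduce both assertions to Proposition \ref{propBBhat} together with the observation that $\Seeh$ in (\ref{SigmaEObs}) is a congruence transform of $\hat\CC_\xx^0$ by the factor $\II-\BBh$. The guiding idea is simple: if $\BBh$ can be driven to infinity by an arbitrarily small contamination, then so can $\Seeh$, provided the ``middle'' matrix $\hat\CC_\xx^0$ does not simultaneously collapse to zero. Both parts of the proposition then become instances of this single mechanism, differing only in how $\BBh$ is made to diverge.

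First I would isolate the auxiliary fact that $\Seeh$ is unbounded whenever $\BBh$ is unbounded and $\hat\CC_\xx^0$ is positive definite. For any unit vector $v$ one has $v^T\Seeh v = w^T\hat\CC_\xx^0 w$ with $w := (\II-\BBh)^T v$, so writing $\lambda_{\min}$ for the smallest eigenvalue of $\hat\CC_\xx^0$ gives $v^T\Seeh v \ge \lambda_{\min}\|w\|^2$. Choosing $v$ to single out a row of $\BBh$ whose norm diverges makes $\|w\|\to\infty$, whence the operator norm, and therefore the Frobenius norm, of $\Seeh$ diverges as well. The only hypothesis used here is that $\hat\CC_\xx^0$ stays bounded away from singularity, which holds almost surely for a sample of $\xx$ in general position; this is precisely the ``in general'' qualifier in the statement.

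For the first claim I would then invoke the SCM-based breakdown from Proposition \ref{propBBhat}. By Lemma \ref{lemmaFUnbAtInf}, a contamination of $\xx$ placed entirely inside one of the intervention experiments drives the total effects $\bt$, and hence $\BBh = f(\bt)$, to infinity at an arbitrarily small rate $\ve\le\bp$. Since such a contamination leaves the purely observational sample untouched, $\hat\CC_\xx^0$ remains the clean, positive-definite sample covariance, and the auxiliary fact forces $\Seeh\to\infty$. Hence the breakdown point of $\Seeh$ is zero. For the second claim I would replace $\Cxkh$ by a robust covariance estimator with nonzero breakdown point and appeal to the second part of Proposition \ref{propBBhat}: even then $\BBh$ has breakdown point zero, because a small contamination can push the (now bounded) total effects $\bt$ toward a singularity of $f$ as in Lemma \ref{lemmaSingB}, sending $\BBh$ to infinity for finite $\bt$. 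Using a robust estimator for the observational experiment too keeps $\hat\CC_\xx^0$ bounded and positive definite under small contamination, so once more the auxiliary fact yields $\Seeh\to\infty$.

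I expect the only genuinely delicate step to be the auxiliary fact, and within it the requirement that $\hat\CC_\xx^0$ remain bounded away from zero: one must rule out the degenerate possibility that the same contamination driving $\BBh\to\infty$ also sends $\hat\CC_\xx^0\to 0$. In the SCM case this is handled by localizing the contamination to the intervention experiments, and in the robust case by the positive definiteness of the robust observational estimate; everything else is a direct inheritance of the breakdown behaviour of $\BBh$ through the congruence (\ref{SigmaEObs}).
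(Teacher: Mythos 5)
Your proof is correct, but it reaches the first claim by a genuinely different route than the paper. The paper's own proof splits the two claims across two distinct mechanisms: for the SCM case it contaminates the \emph{purely observational} experiment, using that $\hat\CC_\xx^0$ itself has BP zero and that $\Seeh$ is linear in $\hat\CC_\xx^0$ (with $\BBh$ held fixed, since it comes from the other experiments), so the divergence of $\hat\CC_\xx^0$ passes directly through to $\Seeh$; only for the second claim does it route the breakdown through $\BBh$ via Proposition \ref{propBBhat}. You instead unify both claims under a single mechanism: contaminate the intervention experiments, drive $\BBh\to\infty$ (via Lemma \ref{lemmaFUnbAtInf} in the SCM case, via the singularities of Lemma \ref{lemmaSingB} in the robust case), and propagate the divergence through the congruence $(\II-\BBh)\hat\CC_\xx^0(\II-\BBh)^T$ using the lower bound $v^T\Seeh v\ge\lambda_{\min}(\hat\CC_\xx^0)\,\|(\II-\BBh)^Tv\|^2$. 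What your approach buys is rigor exactly where the paper is terse: the paper's second claim silently assumes that divergence of $\BBh$ forces divergence of $\Seeh$, and your eigenvalue argument makes that step explicit, including the non-implosion condition on $\hat\CC_\xx^0$ that justifies the ``in general'' qualifier. What the paper's approach buys is a more elementary and independent first claim: it exposes that $\Seeh$ breaks down even if $\BBh$ were perfectly robust, because the observational covariance alone is a weak link---a structural point your unified argument does not surface. (Minor symmetric caveat: the paper's linearity argument implicitly needs $\II-\BBh$ to be nonsingular so the congruence cannot annihilate the divergence of $\hat\CC_\xx^0$, a genericity assumption of the same kind you state explicitly for $\lambda_{\min}(\hat\CC_\xx^0)>0$.)
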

\begin{proof}
    The estimator (\ref{SigmaEObs}) combines the estimators $\hat\CC_\xx^0$ and
    $\hat\BB$. Note, that $\hat\CC_\xx^0$ and $\hat\BB$ are estimated from
    different experiments, i.e., their contaminations are independent. Whence,
    and since $\hat\CC_\xx^0$ is non-robust with BP zero and $\Seeh$ is linear
    in $\hat\CC_\xx^0$, it follows that $\Seeh$ has a BP of zero, too.

    Furthermore, because of Proposition \ref{propBBhat} and because $\BBh$ and
    $\hat\CC_\xx^0$ are estimated using different experiments, even using
    robust estimators with nonzero BP will, in general, still result in a zero
    BP of $\Seeh$. 
\end{proof}

\section{Practical Analysis}\label{secPa}
The analysis of theoretical robustness properties provides us with insight into
the general, asymptotic features of estimators. To also better understand the
behavior in finite settings, we will now consider two extensions
of the LLC estimator to improve the empirical robustness of $\llce$. In both
cases, the approach consists of replacing the SCM version of
$\Cxkh$ with more robust covariance estimation methods, namely MCD and GDE, as
described in Section \ref{secMcd} and Section \ref{secGde}, resp.
Note, that even though MCD and GDE have nonzero BPs, this doesn't translate
into nonzero BPs of $\llce$ because of Proposition \ref{propBBhat} and Proposition
\ref{propSeeh}.

\begin{figure}[h!]
    \centering
    \includegraphics[width=10cm]{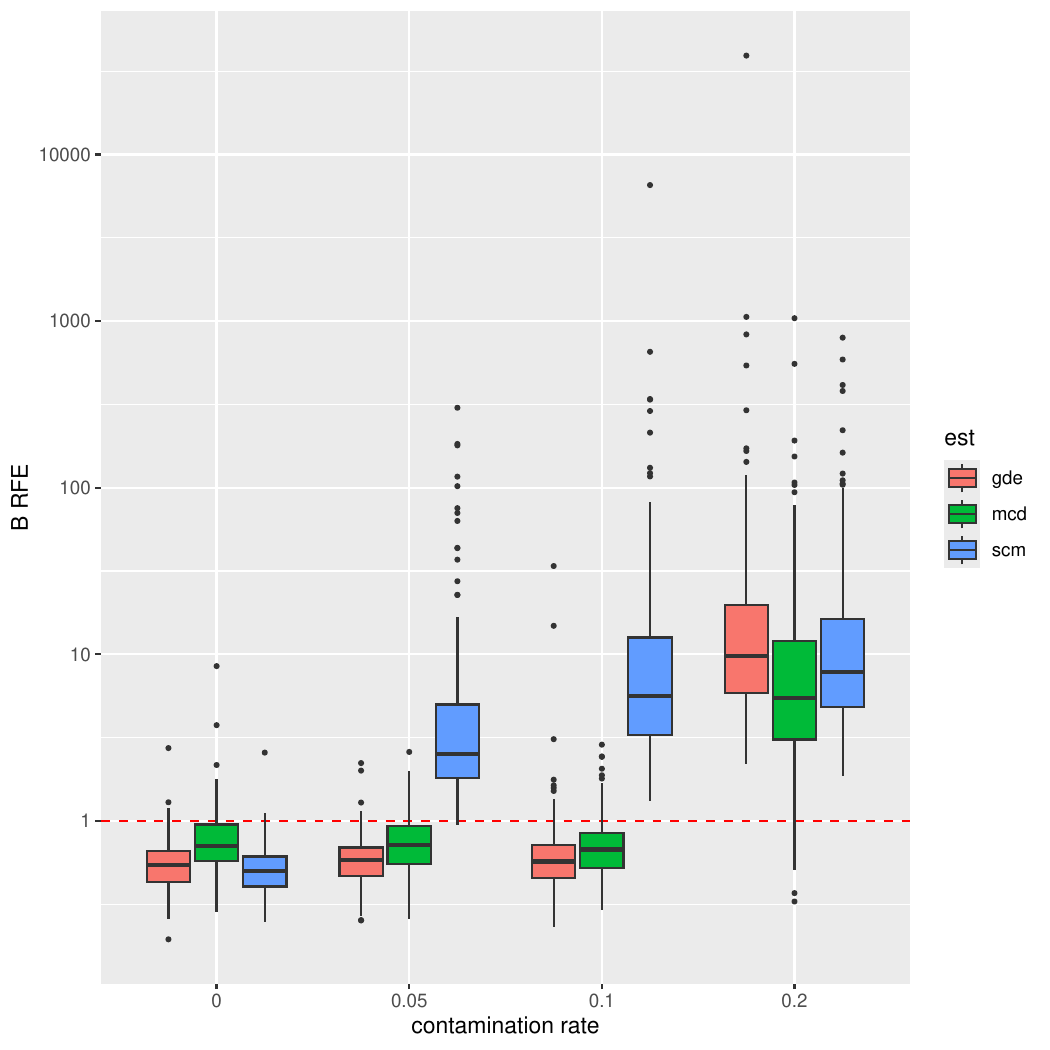}
    \caption{Logarithmic RFE for $\BB$ as a function of the contamination rate for the
    default implementation (SCM) and the two robustified versions (MCD,
    GDE)}\label{figBoxB}
\end{figure}

\begin{figure}[h!]
    \centering
    \includegraphics[width=10cm]{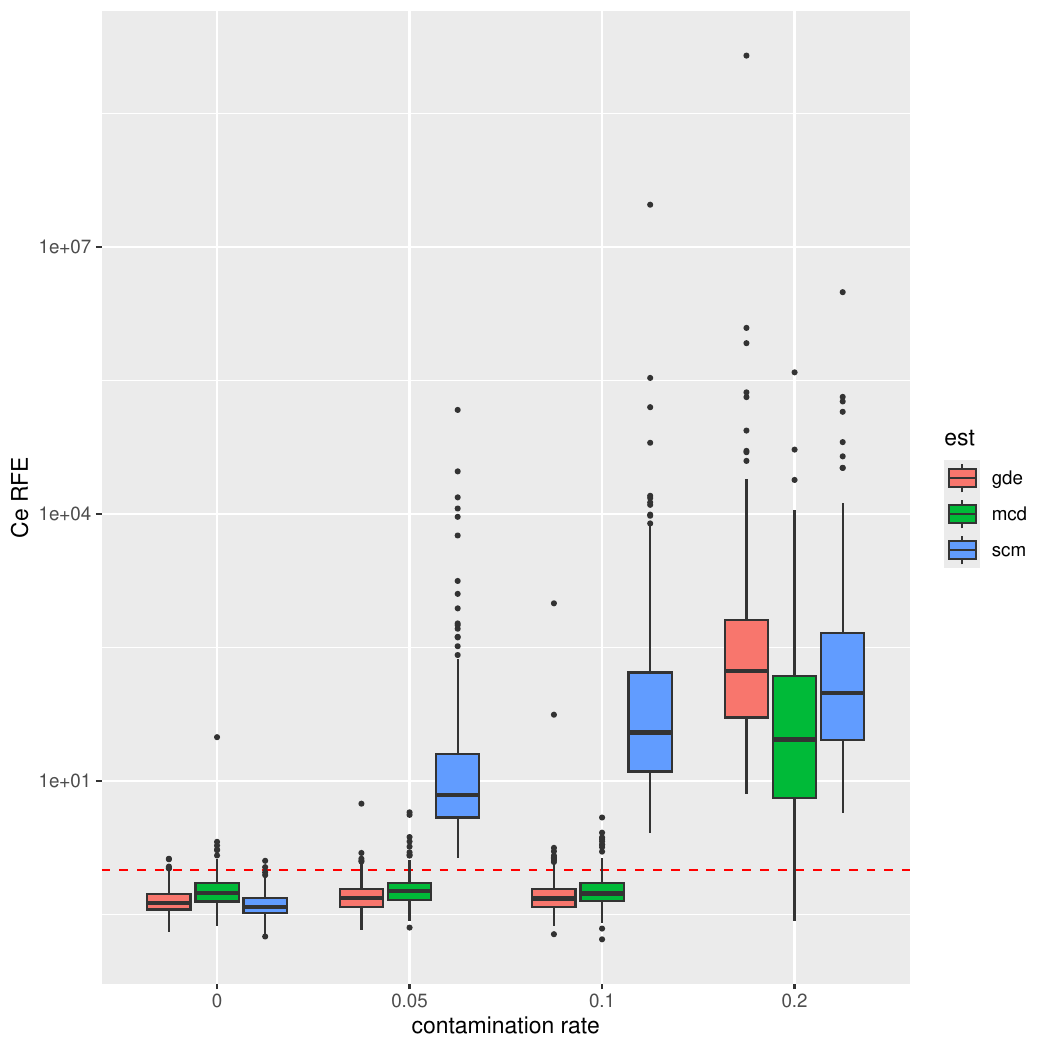}
    \caption{Logarithmic RFE for $\See$ as a function of the contamination rate for the
    default implementation (SCM) and the two robustified versions (MCD,
    GDE)}\label{figBoxCe}
\end{figure}

For the evaluation of those robust variants of LLC, we have to apply them to
data that we know the causal ground truth of. Unfortunately, there are only
very few real-world datasets that satisfy this condition. And they are
certainly not available with known contaminations and in amounts necessary for the statistical evaluations
conducted here. That is why we confine ourselves to synthetic data. We will consider 200 randomly generated causal systems with five
nodes each. The probability of an edge is set to 0.3, as is that of a
confounder. For each model, we simulated six experiments, one is the purely
observational experiment, joined by one singular perfect intervention experiment
for each node. For each experiment, we create a sample of size 200, which will
contain randomly generated contaminations of $\xx$. For more details about the
configurations, see our R implementation on GitHub (the repository will
be made publicly available after the publication of the paper).

The three estimators considered are the LLC estimators $\llce$ using for
the estimation of $\Cxk$ the estimators SCM, MCD, and GDE.
For the comparison we use the relative Frobenius error (RFE) as discussed in Section \ref{secMetric}. First, we
examine data without any contamination ($\ve=0$) and record for all three
estimators the median and MAD (Median Absolute Deviation), taken over the 200 random models, of the RFE.
The result is given in Table \ref{tableZeroCont}.
Next, the same procedure was applied to contamination rates $\ve=0.05$ and $\ve=0.1$ with the results
in Table \ref{table0.05Cont} and \ref{table0.1Cont}.

\begin{table}[h]
    \centering
    \caption{Median and MAD of RFEs for $\ve=0$}
    \begin{tabular}{rrrrr}
        \toprule
        \multirow{2}{*}{Estimator} &
            \multicolumn{2}{c}{$\BBh$ RFE} &
            \multicolumn{2}{c}{$\Seeh$ RFE} \\
            & Median    & MAD           & Median        & MAD           \\
            \midrule
        SCM & 0.50      &  0.15         &   0.38        &    0.11       \\
        MCD & 0.71      &  0.27         &   0.55        &    0.17       \\
        GDE & 0.54      &  0.17         &   0.43        &    0.12       \\
        \bottomrule
    \end{tabular}
    \label{tableZeroCont}
\end{table}

\begin{table}[h]
    \centering
    \caption{Median and MAD of RFEs for $\ve=0.05$}
    \begin{tabular}{rrrrr}
        \toprule
        \multirow{2}{*}{Estimator} &
            \multicolumn{2}{c}{$\BBh$ RFE} &
            \multicolumn{2}{c}{$\Seeh$ RFE} \\
            & Median    & MAD           & Median        & MAD           \\
            \midrule
        SCM &  2.52     &   1.46        &      6.94     &       5.83    \\
        MCD &  0.72     &   0.28        &      0.58     &       0.18    \\
        GDE &  0.58     &   0.17        &      0.49     &       0.16    \\
        \bottomrule
    \end{tabular}
    \label{table0.05Cont}
\end{table}

\begin{table}[h!]
    \centering
    \caption{Median and MAD of RFEs for $\ve=0.1$}
    \begin{tabular}{rrrrr}
        \toprule
        \multirow{2}{*}{Estimator} &
            \multicolumn{2}{c}{$\BBh$ RFE} &
            \multicolumn{2}{c}{$\Seeh$ RFE} \\
            & Median    & MAD           & Median        & MAD           \\
            \midrule
        SCM &  5.63     &   4.72        &     35.02     &      42.86    \\
        MCD &  0.68     &   0.24        &      0.54     &       0.18    \\
        GDE &  0.57     &   0.20        &      0.48     &       0.17    \\
        \bottomrule
    \end{tabular}
    \label{table0.1Cont}
\end{table}

We can see that, for the uncontaminated data, SCM has, in median, the least relative error
and the robust versions follow with a small increase. For the data with
the relatively small contaminations of $\ve = 0.05$ and $\ve = 0.1$, we already see a strong
increase in the relative errors for SCM, while there is almost no change for
MCD and GDE.

All the differences in the median values are highly significant w.r.t.\
Wilcoxon signed-rank test, thanks to the high number of random models, with
even the largest p-value being less than $10^{-3}$. Thus, at least in this
particular setting, GDE seems to be superior to MCD.
To better see the dependence of the error on the contamination rate, we plotted the
errors of the 200 models as dodged boxplots grouped by $\ve$ and the estimators.
For $\BB$ see Figure \ref{figBoxB}, for $\See$
see Figure \ref{figBoxCe}. The horizontal red dashed line indicates the
relative error of 1, i.e., everything above this line is only of very limited
use. Note, that the y-axis is logarithmic. Up to the contamination rate of $\ve
= 0.1$, the robustifications are clearly more robust than the baseline SCM, but
starting with $\ve = 0.2$, even the robust implementations return data with RFE
clearly above one.

\section{Conclusion}
Causality is nowadays successfully applied in many branches of science. To
increase accuracy and widen the field of application, models allowing for
feedback loops and hidden confounders need to be investigated more. We have
focused on the LLC algorithm, which satisfies those two requirements. In
particular, we have considered the robustness of LLC.
The robustness of an estimator refers to its capability to remain mainly
unperturbed by contamination of the measurement data with outliers. For the
given scenario, we considered three different types of contamination.  The
robustness metric used here is the estimator's BP. It was shown that the LLC
estimator $\llce$ is not robust, i.e., that its BP is zero. In a sense, it is
even less robust than the covariance estimators it is based on. Finally, we
have performed experiments investigating the effect of substituting robust
covariance estimators into LLC. We focused on the robust estimators MCD and
GDE, where we also provided an implementation for the latter. Those robust
modifications of LLC are showing a clear improvement for lower contamination
rates. In particular, in the considered scenario, the GDE-based LLC estimator
outperforms the one based on MCD. In future work, we will investigate additional
robustness metrics like those bound to the influence function approach.

The implementation of those extensions of LLC also contains a CCP
version of GDE for the multivariate normal case. To improve
reproducibility and encourage further research, the source code is made freely
available on GitHub.

\printbibliography

@article{llc,
  title={Learning linear cyclic causal models with latent variables},
  author={Hyttinen, Antti and Eberhardt, Frederick and Hoyer, Patrik O},
  journal={The Journal of Machine Learning Research},
  volume={13},
  number={1},
  pages={3387--3439},
  year={2012},
  publisher={JMLR. org}
}

@book{pjs,
  title={Elements of causal inference: foundations and learning algorithms},
  author={Peters, Jonas and Janzing, Dominik and Sch{\"o}lkopf, Bernhard},
  year={2017},
  publisher={The MIT Press}
}

@book{pearl,
  title={Causality},
  author={Pearl, Judea},
  year={2009},
  publisher={Cambridge university press}
}

@article{bongers,
  title={Foundations of structural causal models with cycles and latent variables},
  author={Bongers, Stephan and Forr{\'e}, Patrick and Peters, Jonas and Mooij, Joris M},
  journal={The Annals of Statistics},
  volume={49},
  number={5},
  pages={2885--2915},
  year={2021},
  publisher={Institute of Mathematical Statistics}
}

@article{forre2017markov,
  title={Markov properties for graphical models with cycles and latent variables},
  author={Forr{\'e}, Patrick and Mooij, Joris M},
  journal={arXiv preprint arXiv:1710.08775},
  year={2017}
}

@inproceedings{llc-f,
  title={Causal discovery for linear cyclic models with latent variables},
  author={Hyttinen, Antti and Eberhardt, Frederick and Hoyer, Patrik O},
  booktitle={Proceedings of the Fifth European Workshop on Probabilistic Graphical Models (PGM 2010)},
  pages={153--160},
  year={2010},
  organization={Citeseer}
}

@inproceedings{eberhardt2010combining,
  title={Combining experiments to discover linear cyclic models with latent variables},
  author={Eberhardt, Frederick and Hoyer, Patrik and Scheines, Richard},
  booktitle={Proceedings of the Thirteenth International Conference on Artificial Intelligence and Statistics},
  pages={185--192},
  year={2010},
  organization={JMLR Workshop and Conference Proceedings}
}

@article{ccp,
  title={The concave-convex procedure},
  author={Yuille, Alan L and Rangarajan, Anand},
  journal={Neural computation},
  volume={15},
  number={4},
  pages={915--936},
  year={2003},
  publisher={MIT Press}
}

@article{shrink,
  title={A well-conditioned estimator for large-dimensional covariance matrices},
  author={Ledoit, Olivier and Wolf, Michael},
  journal={Journal of multivariate analysis},
  volume={88},
  number={2},
  pages={365--411},
  year={2004},
  publisher={Elsevier}
}

@article{shrinkNL,
  title={Nonlinear shrinkage estimation of large-dimensional covariance matrices},
  author={Ledoit, Olivier and Wolf, Michael},
  year={2012}
}

@book{basu,
  title={Statistical inference: the minimum distance approach},
  author={Basu, Ayanendranath and Shioya, Hiroyuki and Park, Chanseok},
  year={2011},
  publisher={CRC press}
}

@article{fujiGamma,
  title={Robust parameter estimation with a small bias against heavy contamination},
  author={Fujisawa, Hironori and Eguchi, Shinto},
  journal={Journal of Multivariate Analysis},
  volume={99},
  number={9},
  pages={2053--2081},
  year={2008},
  publisher={Elsevier}
}

@book{egKom,
  title={Minimum Divergence Methods in Statistical Machine Learning},
  author={Eguchi, Shinto and Komori, Osamu},
  year={2022},
  publisher={Springer}
}

@book{sgs,
  title={Causation, prediction, and search},
  author={Spirtes, Peter and Glymour, Clark N and Scheines, Richard},
  year={2000},
  publisher={MIT press}
}

@article{spirtes1995cyclic,
  title={Directed cyclic graphical representations of feedback models},
  author={Spirtes, Peter L},
  journal={arXiv preprint arXiv:1302.4982},
  year={1995}
}

@article{pearl1996cyclic,
  title={Identifying independencies in causal graphs with feedback},
  author={Pearl, Judea and Dechter, Rina},
  journal={arXiv preprint arXiv:1302.3595},
  year={1996}
}

@article{neal2000cyclic,
  title={On deducing conditional independence from d-separation in causal graphs with feedback (research note)},
  author={Neal, Radford M},
  journal={Journal of Artificial Intelligence Research},
  volume={12},
  pages={87--91},
  year={2000}
}

@book{maronna2019robust,
  title={Robust statistics: theory and methods (with R)},
  author={Maronna, Ricardo A and Martin, R Douglas and Yohai, Victor J and Salibi{\'a}n-Barrera, Mat{\'\i}as},
  year={2019},
  publisher={John Wiley \& Sons}
}

@book{huber2004robust,
  title={Robust statistics},
  author={Huber, Peter J},
  volume={523},
  year={2004},
  publisher={John Wiley \& Sons}
}

@book{hampel2011robust,
  title={Robust statistics: the approach based on influence functions},
  author={Hampel, Frank R and Ronchetti, Elvezio M and Rousseeuw, Peter J and Stahel, Werner A},
  year={2011},
  publisher={John Wiley \& Sons}
}

@article{arsigny2006log,
  title={Log-Euclidean metrics for fast and simple calculus on diffusion tensors},
  author={Arsigny, Vincent and Fillard, Pierre and Pennec, Xavier and Ayache, Nicholas},
  journal={Magnetic Resonance in Medicine: An Official Journal of the International Society for Magnetic Resonance in Medicine},
  volume={56},
  number={2},
  pages={411--421},
  year={2006},
  publisher={Wiley Online Library}
}

@article{dryden2009non,
  title={Non-Euclidean statistics for covariance matrices, with applications to diffusion tensor imaging},
  author={Dryden, Ian L and Koloydenko, Alexey and Zhou, Diwei},
  year={2009}
}

@article{r2013r,
  title={R: A language and environment for statistical computing},
  author={R Core Team, R and others},
  year={2013},
  publisher={Vienna, Austria}
}

@article{rrcov,
  title={An object-oriented framework for robust multivariate analysis},
  author={Todorov, Valentin and Filzmoser, Peter},
  journal={Journal of Statistical Software},
  volume={32},
  pages={1--47},
  year={2010}
}

@book{AyIG,
  title={Information geometry},
  author={Ay, Nihat and Jost, J{\"u}rgen and V{\^a}n L{\^e}, H{\^o}ng and Schwachh{\"o}fer, Lorenz},
  volume={64},
  year={2017},
  publisher={Springer}
}

@article{daviesMcdBp,
  title={Asymptotic behaviour of S-estimates of multivariate location parameters and dispersion matrices},
  author={Davies, P Laurie},
  journal={The Annals of Statistics},
  pages={1269--1292},
  year={1987},
  publisher={JSTOR}
}

@article{mcd,
  title={Multivariate estimation with high breakdown point},
  author={Rousseeuw, Peter J},
  journal={Mathematical statistics and applications},
  volume={8},
  number={283-297},
  pages={37},
  year={1985}
}

\end{document}